\newcommand\inner[1]{\left\langle #1 \right\rangle}
\newcommand{\one}{\mathbbm{1}}
\newcommand\Mips[1][k]{\textsc{Mips$_{#1}$}}
\newcommand\Rank{\textsc{Rank}}
\newcommand\Kmr[1][k]{\textsc{Kmr$_{#1}$}}
\newcommand\Q{\mathcal{Q}}
\newcommand\Codebook{\mathcal{C}}
\newcommand\X{\mathcal{X}}
\newcommand\Loss{\mathcal{L}}
\newcommand\R{\mathbb{R}}
\newcommand{\E}{\mathbb{E}}
\DeclareMathOperator*{\argmin}{arg\,min}
\DeclareMathOperator*{\kargmax}{\mathit{k}-arg\,max}
\DeclareMathOperator{\proj}{proj}
\DeclareMathOperator{\Cov}{Cov}
\newtheorem{theorem}{Theorem}[section]
\newtheorem{corollary}{Corollary}[theorem]
\newtheorem{lemma}[theorem]{Lemma}
\title{SOAR: Improved Indexing for Approximate Nearest Neighbor Search}
\author{%
  Philip Sun, David Simcha, Dave Dopson, Ruiqi Guo, and Sanjiv Kumar\\
  Google Research\\
  \texttt{\{sunphil,dsimcha,ddopson,guorq,sanjivk\}@google.com}
}
\begin{document}

\maketitle

\begin{abstract}
  This paper introduces SOAR: \textbf{S}pilling with \textbf{O}rthogonality-\textbf{A}mplified \textbf{R}esiduals, a novel data indexing technique for approximate nearest neighbor (ANN) search. SOAR extends upon previous approaches to ANN search, such as spill trees, that utilize multiple redundant representations while partitioning the data to reduce the probability of missing a nearest neighbor during search. Rather than training and computing these redundant representations independently, however, SOAR uses an \textit{orthogonality-amplified residual} loss, which optimizes each representation to compensate for cases where other representations perform poorly. This drastically improves the overall index quality, resulting in state-of-the-art ANN benchmark performance while maintaining fast indexing times and low memory consumption.
\end{abstract}

\section{Introduction}
The $k$-nearest neighbor search problem is defined as follows: we are given an $n$-item dataset $\X\in\R^{n\times d}$ composed of $d$-dimensional vectors, and for a query $q\in\R^d$, we would like to return the $k$ vectors in $\X$ closest to $q$. This problem naturally arises from a number of scenarios that require fast, online retrieval from vector databases; such applications include recommender systems \citep{sigkdd2019_mobius}, image search \citep{icml2021_clip}, and question answering \citep{emnlp2020_dpr}, among many others.

While nearest neighbor search may be easily implemented with a linear scan over the elements of $\X$, many applications of nearest neighbor search utilize large datasets for which a brute-force approach is computationally intractable. The rapid development of deep learning and embedding techniques has been an especially strong driver for larger $k$-nearest neighbor datasets. For instance, multiple recent large language model (LLM) works incorporate external information by using $k$-nearest neighbors to retrieve from longer contexts \citep{wu2022memorizing} or large text corpora \citep{icml2022_retro}. While datasets from typical applications a decade ago (e.g. Netflix) had sizes of around 1 million \citep{neurips2014_alsh}, the standard evaluation datasets from \href{https://big-ann-benchmarks.com/}{\texttt{big-ann-benchmarks}} \citep{big_ann_bench} all have 1 billion vectors with hundreds of dimensions.

These large datasets, in conjunction with the \textit{curse of dimensionality}, a phenomenon which often makes it impossible to find the exact nearest neighbors without resorting to a linear scan, has led to a focus on approximate nearest neighbor (ANN) search, which can trade off a small search accuracy loss for a significant increase in search throughput.

A number of indexing schemes proposed for ANN search, including spill trees \citep{nips2004_spill}, assign datapoints to multiple portions of the index, such that the overall index provides a replicated, non-disjoint (``spilled") view of the dataset. These algorithms have used partitioning schemes amenable to random analysis, so that the replication factor can be provably shown to exponentially reduce the probability of missing a nearest neighbor. However, partitioning high-dimensional data is difficult, even without the constraint of doing so with a method amenable to randomized analysis; such constrained partitioning techniques have led to inferior quality indices that have not yielded good compute-accuracy tradeoffs.

On the other hand, another family of approaches, those leveraging vector quantization (VQ), have demonstrated excellent empirical ANN performance, but little research has been done exploring the use of multiple randomized initializations of VQ indices to further increase ANN search efficiency. This is partly due to the fact that multiple VQ indices initialized through different random seeds do not have strong statistical guarantees and cannot be proven independent in their failure probabilities. This paper bridges the gap between these two approaches to ANN:
\begin{itemize}
  \item We demonstrate the weaknesses of current VQ-based ANN indices and illustrate how multiple VQ indices may be used in conjunction to improve ANN search efficiency.
  \item We show that the naive training of multiple VQ indices for a single dataset leads to correlation in failure probabilities among the indices, limiting ANN search efficiency uplift, and present \textit{spilling with orthogonality-amplified residuals} (SOAR) to ameliorate this correlation.
  \item We benchmark SOAR and achieve state-of-the-art performance, outperforming standard VQ indices, spilled VQ indices trained without SOAR, and all other approaches to ANN search that were submitted to the benchmark.
\end{itemize}

\section{Preliminaries and Notation}
\subsection{Maximum inner product search (MIPS)}
SOAR applies to the subclass of nearest neighbor search known as maximum inner product search, defined as follows for a query $q$ and a dataset $\X$:
$$\Mips[k](q,\X)=\kargmax_{x\in\X}\inner{q,x}.$$
Many nearest neighbor problems arise in the MIPS space naturally \citep{2013cvpr_mips}, and a number of conversions exist \citep{recsys14_mipsconv} from other commonly used ANN search metrics, such as Euclidean and cosine distance, to MIPS, and vice versa. We measure MIPS search accuracy using recall@$k$, defined as follows: if our algorithm returns the set $S$ of $k$ candidate nearest neighbors, its recall@$k$ equals $|\Mips(q,\X) \cap S|/k$. Additionally, we introduce the following notation to assist with MIPS analysis:
$$\Rank(q,v,\X)=\sum_{x\in\X}\one_{\inner{q,v}\le\inner{q,x}}.$$
The max inner product's $\Rank$ is 1; assuming no ties, $\Rank(q,v,\X)\in[1,k]$ for $v\in\Mips(q,\X)$.

\subsection{Vector quantization (VQ)}
Vector quantization can be leveraged to construct data structures that effectively prune the ANN search space; these data structures are commonly known as inverted file indices (IVF) or k-means trees. Vector-quantizing a dataset $\X$ produces two outputs:
\begin{itemize}
  \item $\Codebook\in\R^{c\times d}$, the codebook containing the $c$ partition centers.
  \item $\pi(v): \R^d\mapsto \{1,\ldots,c\}$, the partition assignments that map each vector in $\X$ to one of the partition centers in $\Codebook$. Oftentimes, $\pi(v)$ is defined as $\argmin_{i\in\{1,\ldots,c\}}\norm{v-\Codebook_i}^2$, although other assignment functions may also be used.
\end{itemize}

We can then construct an inverted index over $\pi$; for each partition $i$, we store the set of datapoint indices belonging to that partition: $\{j | \pi(\X_j) = i\}$. Then, instead of computing $\Mips(q,\X)$ directly, we may first compute $\Mips[k'](q,\Codebook)$, which is much faster because $|\Codebook| \ll |\X|$. We can then use our inverted index data structure to further evaluate the datapoints within the top $k'$ partitions.

\subsubsection{The k-means recall (KMR) curve}
The effectiveness of the VQ-based pruning approach depends on the rank of the partitions that $\Mips(q,\X)$ are in. If these partitions rank very well, we can set $k'$ very low, and search through few partitions while still achieving great MIPS recall. If these partitions rank poorly, however, the algorithm will have to spend lots of compute searching through many partitions in order to find the nearest neighbors. We may quantify this effectiveness using the \textit{k-means recall} (KMR) curve (named because VQ indices are commonly trained through k-means, although KMR can be computed for any VQ index, not just those trained via k-means), defined as follows:
\begin{equation}\label{eq:kmr}
  \Kmr(t; \Q,\X, \Codebook, \pi)=
  \dfrac{1}{k\cdot |\Q|}\sum_{q\in\Q}\sum_{v\in\Mips(q,\X)}
    \one_{\Rank(q,\Codebook_{\pi(v)},\Codebook) \le t}
\end{equation}
The KMR curve for a given query sample $\Q$, dataset $\X$, and VQ index $(\Codebook, \pi)$ quantifies the proportion of MIPS nearest neighbors present in the top $t$ VQ partitions, over varying $t$. The KMR curve is non-decreasing, with $\Kmr(0)=0$ and $\Kmr(|\Codebook|)=1$. A KMR curve that more quickly approaches 1 indicates superior index quality.

\section{Method}
The key insight of our work is the use of a novel loss function to assign a datapoint $x$ to multiple VQ partitions, such that these additional partitions effectively recover $x$ in the pathological cases when $x$ is a nearest neighbor for a query that $x$'s original VQ partition handles poorly. These partitions that include $x$ work together synergistically to provide greater ANN search efficiency than any single partition could alone. Below, we describe the motivations behind our new loss and this multiple assignment. The supplementary materials contains the source code to generate this section's plots.

\subsection{Search difficulty and quantized score error}\label{sec:method-recall-loss}
\begin{wrapfigure}{r}{0.4\textwidth}
  \vspace{-6mm}\begin{center}
    \includegraphics[width=0.4\textwidth]{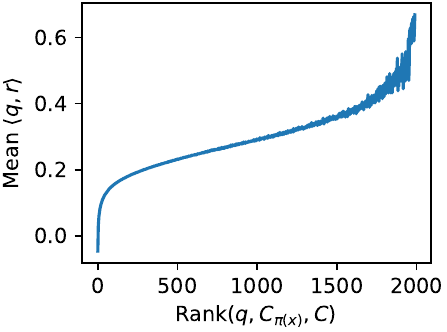}
  \end{center}
  \caption{Greater search difficulty, as quantified by a higher $\Rank(q,\Codebook_{\pi(x)},\Codebook)$, is associated with highly positive $\inner{q,r}$.}
  \label{fig:rank_score_err}
\end{wrapfigure}

For a datapoint $x$, define $r$ as the \textit{partitioning residual}, equal to $x-\Codebook_{\pi(x)}$. The partition center $\Codebook_{\pi(x)}$ is the quantized form of $x$, so the difference between the exact and the quantized inner product scores is
$$\inner{q,x}-\inner{q,\Codebook_{\pi(x)}}=\inner{q,x-\Codebook_{\pi(x)}}=\inner{q,r},$$
which we denote the \textit{quantized score error}. Consider the case when $x$ is a nearest neighbor for the query: $x\in\Mips(q,\X)$. In this scenario, we would like $\Rank(q,\Codebook_{\pi(x)},\Codebook)$ to be low, so that the algorithm may search just the top few partitions and find the nearest neighbor $x$. A high $\Rank$ would lead to increased search difficulty, because the algorithm would have to search more partitions and therefore do more work to find $x$.

Given that $\inner{q,\Codebook_{\pi(x)}}=\inner{q,x}-\inner{q,r}$ and that, by definition, $\inner{q,x}$ is large when $x\in\Mips(q,\X)$, we know that $\inner{q,\Codebook_{\pi(x)}}$ will be small (leading to greater search difficulty) when $\inner{q,r}$ is highly positive. Indeed, this can be confirmed empirically; in Figure~\ref{fig:rank_score_err} we plot the mean $\inner{q,r}$ as a function of $\Rank(q,\Codebook_{\pi(x)}, \Codebook)$ for all query-neighbor pairs $(q,x)$ in the Glove-1M dataset. The more difficult-to-find pairs have, on average, notably higher $\inner{q,r}$.

SOAR increases search efficiency in these situations when $x\in\Mips(q,\X)$ and $\inner{q,r}$ is high.

\subsection{Quantized score error decomposition}\label{sec:method_decomp}
By the definition of the inner product,
$$\inner{q,r}=\norm{q}\cdot\norm{r}\cdot\cos\theta$$
where $\theta$ is the angle formed between the query and the partitioning residual. Without loss of generality, we may assume that $\norm{q}=1$ without any effect on the ranking of MIPS nearest neighbors. The two contributors to a highly positive $\inner{q,r}$ are therefore a large $\norm{r}$ and $\cos\theta$ being near unity; reducing $\inner{q,r}$ requires targeting either, or both, of these contributors.

SOAR targets $\cos\theta$, because that term is both easier to reduce and has greater impact on $\inner{q,r}$:
\begin{itemize}
  \item The VQ training loss already aims to minimize $\E[\norm{x-\Codebook_{\pi(x)}}^2]=\E[\norm{r}^2]$, so further reductions in $\norm{r}$ are difficult. In contrast, $\cos\theta$ is not directly optimized for, making it more amenable to reduction.
  \item $\cos\theta$ is typically much more strongly correlated with $\inner{q,r}$ than $\norm{r}$ is, so its reduction has greater impact on search difficulty. Qualitatively, this stronger correlation can be explained by the far greater range of values $\cos\theta$ may take on, compared to $\norm{r}$. The latter's values are concentrated between 0 and $\norm{x}$, with not a very large ratio between the largest and smallest value. In comparison, $\cos\theta$ can take on any value in the range $[-1,1]$; only this large multiplicative dynamic range can explain the variance in $\inner{q,r}$. Empirically, this can be seen demonstrated on Glove-1M in Figure~\ref{fig:score_err_decomp} below.
\end{itemize}

\begin{figure}[h]
  \begin{center}
    \includegraphics[width=0.48\textwidth]{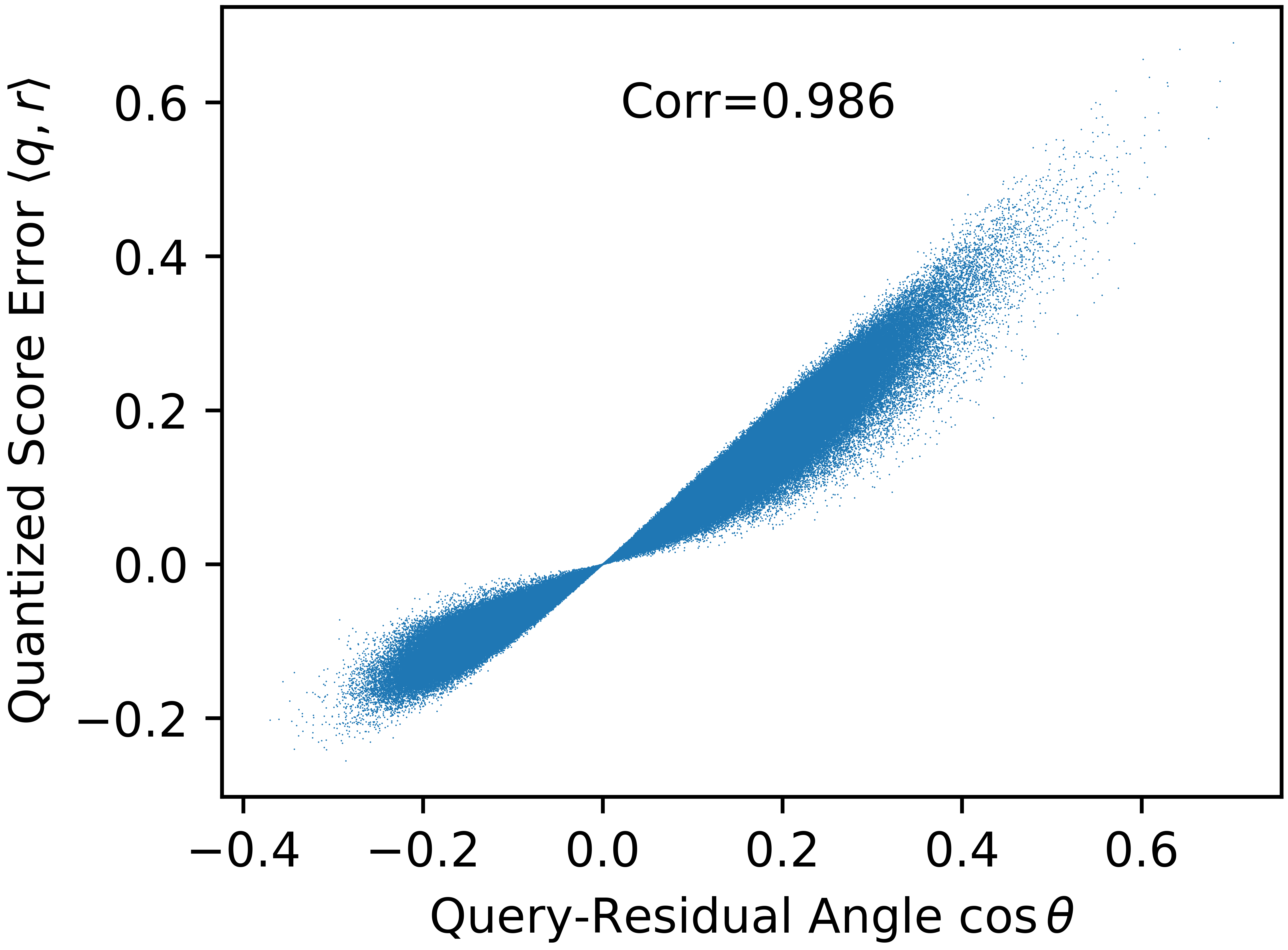}
    \hfill
    \includegraphics[width=0.48\textwidth]{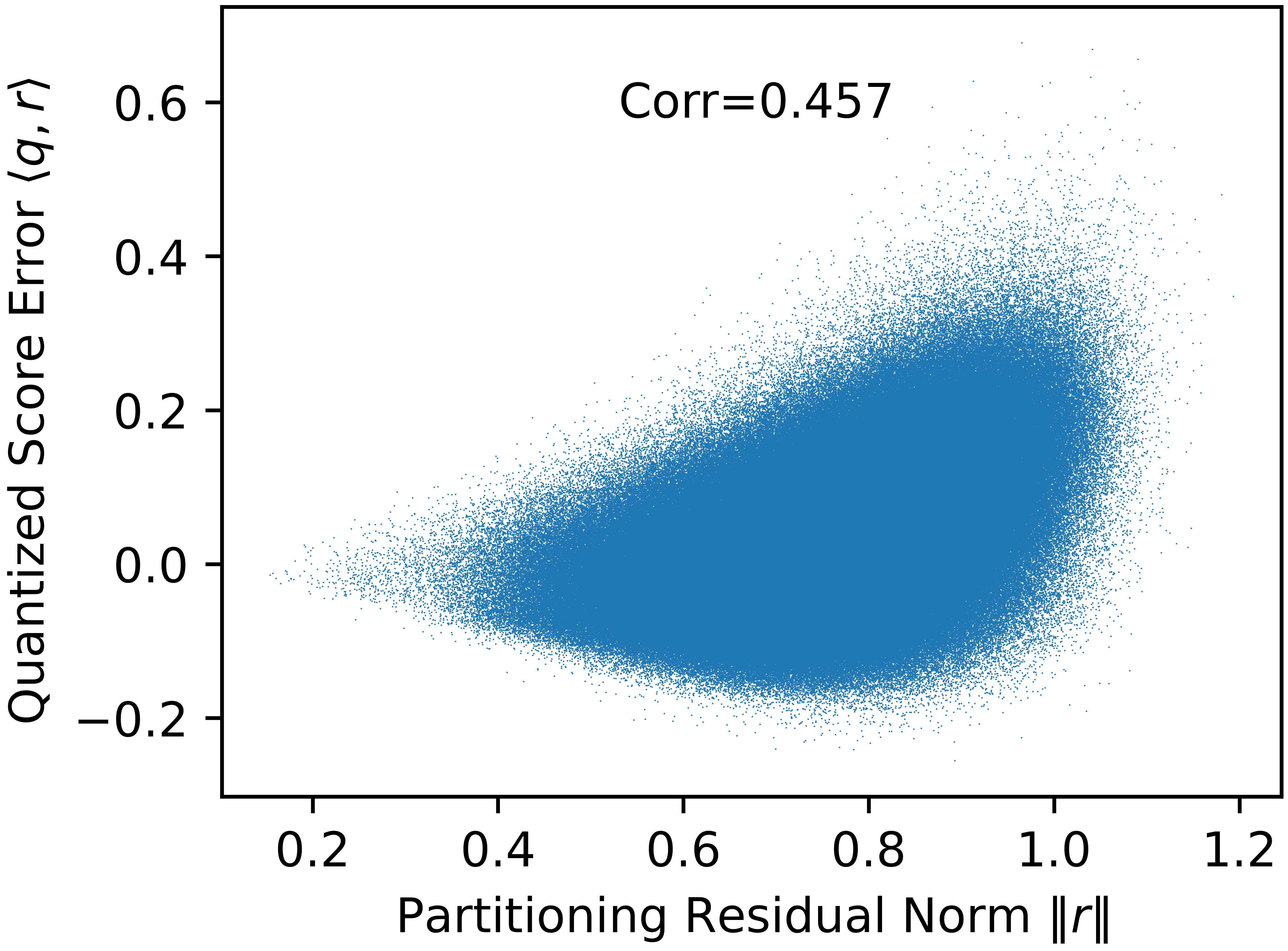}
  \end{center}
  \caption{The cosine of the query-residual angle, $\cos\theta$ (\textbf{left}), is far more correlated with $\inner{q,r}$ than the residual norm $\norm{r}$ (\textbf{right}), making the former a more promising target for reducing $\inner{q,r}$.}
  \label{fig:score_err_decomp}
\end{figure}

\subsection{Spilled VQ assignment}
We may attempt to mitigate the increase in search difficulty presented by high $\cos\theta$ by assigning each datapoint $x$ to a second VQ partition $\pi'(x)$, resulting in a second partitioning residual $r'$ that forms an angle $\theta'$ with the query. We denote this second assignment a \textit{spilled assignment}.

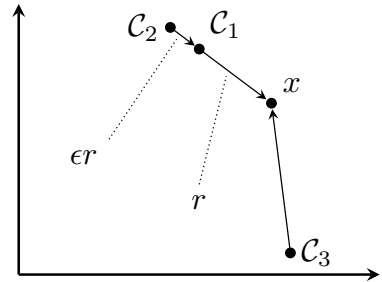
\begin{wrapfigure}{r}{0.4\textwidth}
  \vspace{-5mm}\begin{center}
    \scalebox{1.2}{\begin{tikzpicture}
            \coordinate (X) at (2.8,1.9);
            \coordinate (C1) at (2.0,2.5);
            \coordinate (C2) at (1.68,2.74);
            \coordinate (C3) at (3.01, 0.24);
            
            \draw[thick,-stealth] (0,0) -- (0,3);
            \draw[thick,-stealth] (0,0) -- (4,0);
            \filldraw[black] (X) circle (1.5pt) node[anchor=south west]{$x$};
            \filldraw[black] (C1) circle (1.5pt) node[anchor=south west]{$\Codebook_1$};
            \filldraw[black] (C2) circle (1.5pt) node[anchor=east]{$\Codebook_2$};
            \filldraw[black] (C3) circle (1.5pt) node[anchor=west]{$\Codebook_3$};
            
            \draw[-stealth] (C1) -- ($0.93*(X) + 0.07*(C1)$);
            \draw[-stealth] (C2) -- ($0.85*(C1) + 0.15*(C2)$);
            \draw[-stealth] (C3) -- ($0.96*(X) + 0.04*(C3)$);
            
            \draw[densely dotted] (1,1.5) node[anchor=north east]{$\epsilon r$} -- (1.75, 2.61);
            \draw[densely dotted] (2,1) node[anchor=north]{$r$} -- (2.3, 2.2);
        \end{tikzpicture}}
  \end{center}
  \caption{Naive spilled VQ assignment may be ineffective; selecting the two closest centroids $\Codebook_1$ and $\Codebook_2$ provides no benefit over using just $\Codebook_1$.}
  \vspace{-4mm}
  \label{fig:bad-spill}
\end{wrapfigure}

This gives our ANN algorithm a ``second chance"; assuming $\theta$ and $\theta'$ are independently distributed, the situations where $\cos\theta$ is high, thereby leading to high $\inner{q,r}$ and greater search difficulty, are unlikely to also have a high $\cos\theta'$. A low $\cos\theta'$ should lead to a low $\inner{q,r'}$ and should allow the ANN algorithm to find $x$ fairly efficiently. This idea can be extended to further (>2) assignments as well.

This approach leads to some implementation intricacies, as discussed in Section~\ref{sec:method-impl}, but for now we focus on problems concerning theory when the spilled assignment $\pi'(x)$ is chosen so as to minimize $||r'||^2$.

This proposed approach fails when there is correlation between $\cos\theta$ and $\cos\theta'$, shown to the extreme in Figure~\ref{fig:bad-spill}. In this figure, if $\pi(x)=1$, we may pick $\pi'(x)=2$ because $\Codebook_2$ is the second-closest centroid to $x$. However, $\Codebook_2$ is collinear with $\Codebook_1$ and $x$, resulting in $\theta=\theta'$. This leads to $\Codebook_2$ acting as a strictly worse version of $\Codebook_1$ for quantizing $x$: $\inner{q,r'}=(1+\epsilon)\inner{q,r}$. The choice $\pi'(x)=3$ would've been more effective, despite $\norm{x-\Codebook_3}>\norm{x-\Codebook_2}$, due to $\Codebook_3$'s position giving a $\theta'\neq\theta$.

\begin{figure}[h]
  \begin{center}
    \begin{subfigure}[b]{0.49\textwidth}
         \centering
         \includegraphics[width=\textwidth]{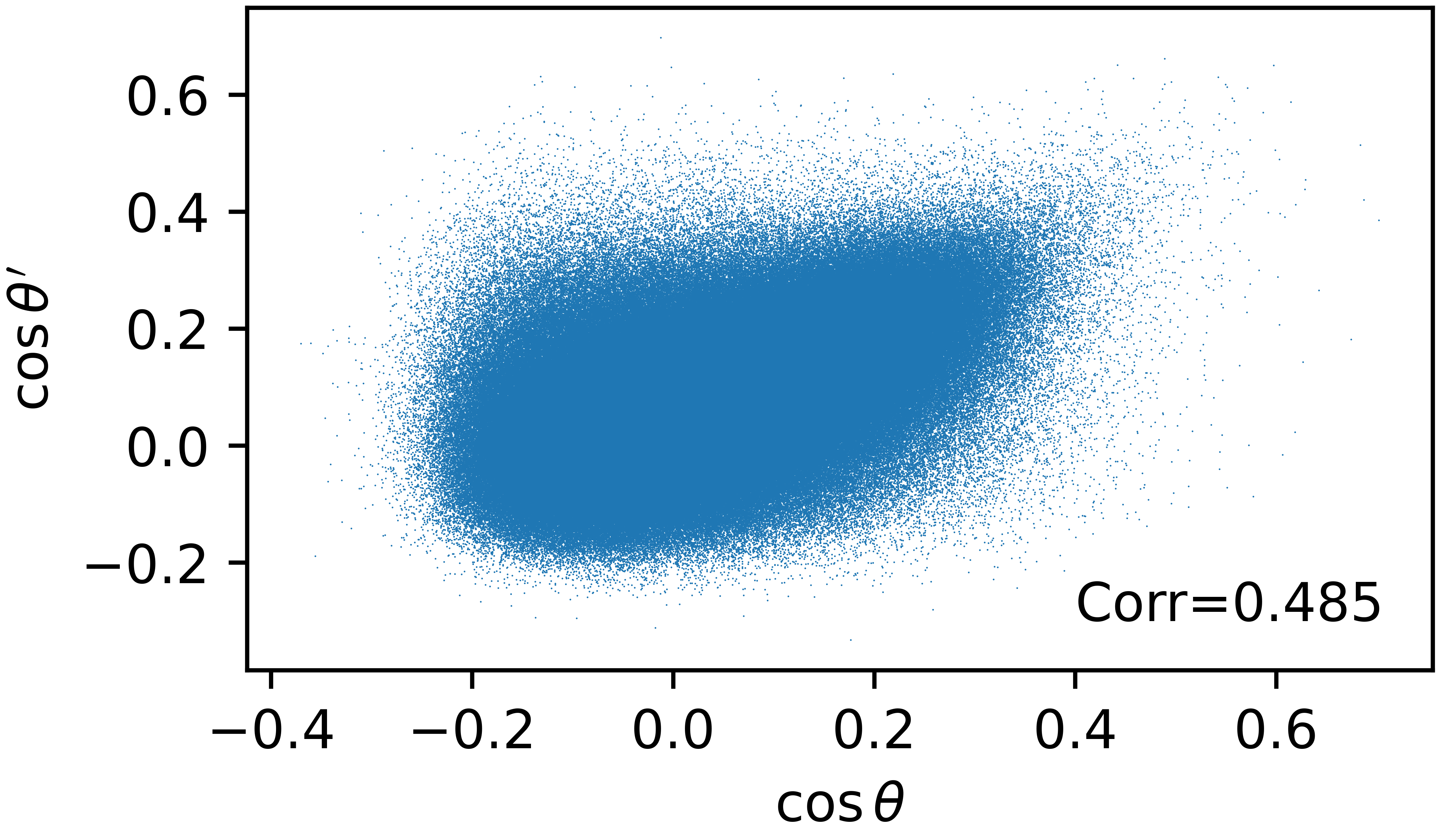}
         \caption{$\theta$ and $\theta'$ come from a single VQ index, and are the closest and second-closest centroids, respectively.}
         \label{fig:a1a2-spill}
     \end{subfigure}
     \hfill
     \begin{subfigure}[b]{0.49\textwidth}
         \centering
         \includegraphics[width=\textwidth]{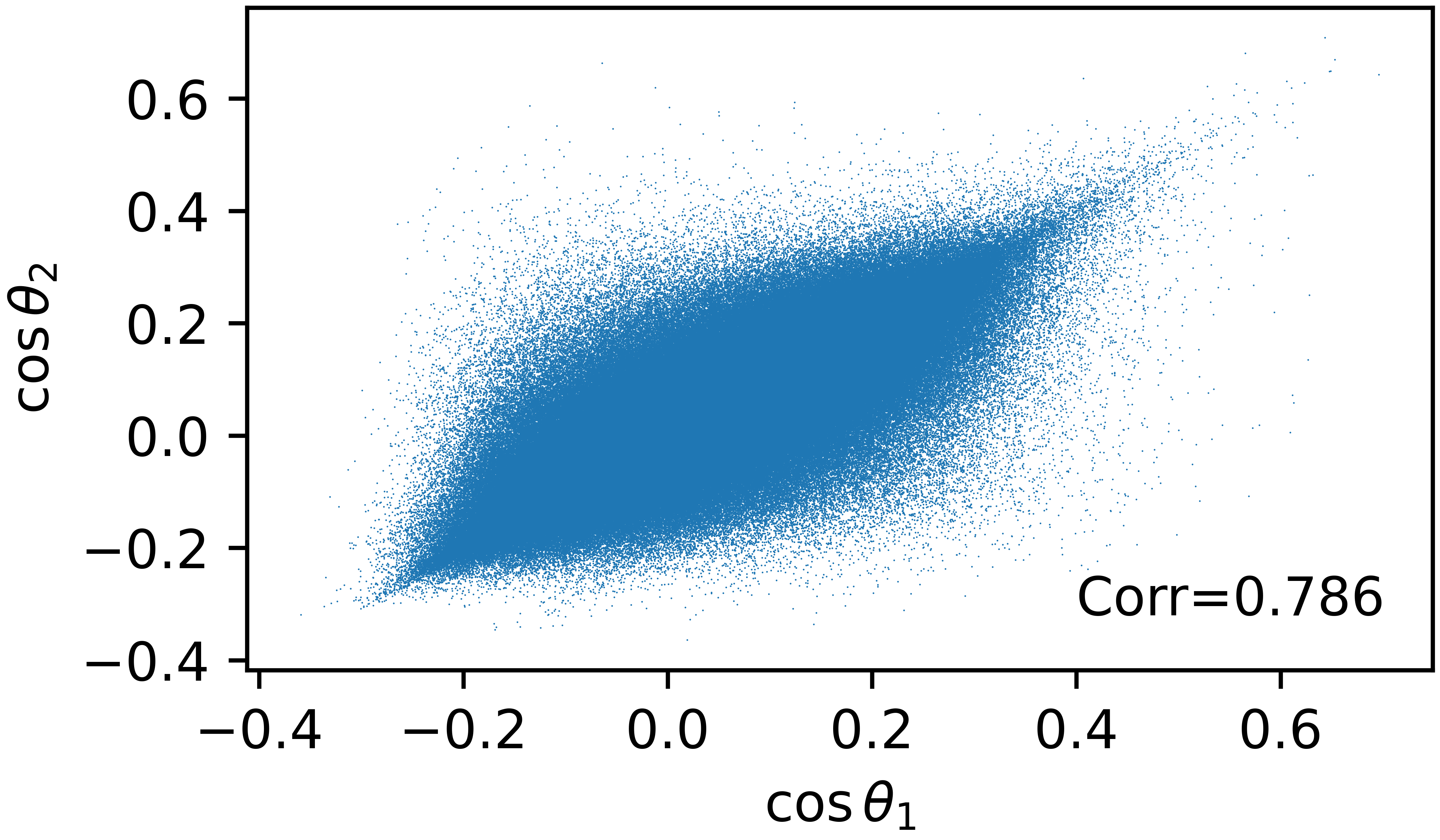}
         \caption{$\theta_1$ and $\theta_2$ come from two VQ indices, trained with different random seeds.}
         \label{fig:a1a2-ind}
     \end{subfigure}
  \end{center}
  \caption{On Glove-1M, both a naive top-2 spilled assignment and two separately trained VQ indices exhibit noticeable correlation in query-residual angles; this reduces spilled assignment efficacy.}
  \label{fig:a1a2}
\end{figure}

Experiments show that more than a theoretical concern, this is also a real-world issue. On the Glove-1M dataset, if we choose $\pi(x)$ and $\pi'(x)$ to be the closest and second-closest centroids by Euclidean distance, respectively, to $x$, we find a noticeable correlation between $\cos\theta$ and $\cos\theta'$, shown in Figure~\ref{fig:a1a2-spill}. An even stronger correlation occurs between two VQ indices trained separately with different random seeds (Figure~\ref{fig:a1a2-ind}). This correlation diminishes the benefit of spilled assignment, and reducing this correlation is critical to improving ANN search performance.

\subsection{Spilling with orthogonality-amplified residuals}\label{sec:method-soar}
We now derive \textit{spilling with orthogonality-amplified residuals} (SOAR), a novel VQ assignment loss that directly confronts the problem of correlated query-residual angles. To derive this loss, assume that the VQ centroids $\Codebook$ and assignments $\pi$ are fixed. The goal of further spilled assignments $\pi'$ is to achieve low quantized score error $\inner{q,r'}$ specifically when the original spilled assignment has high $\cos\theta$, leading to high $\inner{q,r}$. In cases where $\cos\theta$ is low, the original assignments already achieve low quantized score error, so in such cases, it is less important to also have low $\inner{q,r'}$.

We now modify our assignment loss function to reflect this emphasis; rather than selecting $c'\in\Codebook$ to minimize $\E_{q\in\Q}[(\inner{q,x}-\inner{q,c'})^2]=\E_{q\in\Q}[\inner{q,r'}^2]$, we now also add a weighting term for $\cos\theta$:
\begin{equation}\label{eq:loss}
    \Loss(r',r,\Q)
    =\E_{q\in\Q}\left[w\left(\dfrac{\inner{q,r}}{\norm{r}\cdot\norm{q}}\right)\inner{q,r'}^2\right]
    =\E_{q\in\Q}[w(\cos\theta)\inner{q,r'}^2],
\end{equation}
where the weight function $w(\cdot): \R\mapsto\R^{\ge0}$ should be chosen to give greater emphasis to higher $\cos\theta$. To utilize this loss function in VQ assignment, we must evaluate the expectation, which leads to the following result:

\begin{theorem}\label{thm:loss}
For the weight function $w(t)=|t|^\lambda$ and a query distribution $\Q$ that is uniformly distributed over the unit hypersphere,
$$\Loss(r',r,\Q)\propto \norm{r'}^2+\lambda\norm{\proj_r r'}^2.$$
\end{theorem}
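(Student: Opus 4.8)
The plan is to exploit the rotational invariance of the uniform sphere measure. Since $\norm{q}=1$ for $q\in\Q$, we have $\cos\theta=\inner{q,r}/\norm{r}$, and because the measure is rotation-invariant I would first rotate coordinates so that $r$ points along the first axis, writing $e_1=r/\norm{r}$. Then $\cos\theta=\inner{q,e_1}=q_1$, the first coordinate of $q$, so $w(\cos\theta)=|q_1|^\lambda$; moreover the projection term satisfies $\norm{\proj_r r'}^2=(r'_1)^2$. In these coordinates the loss is simply $\Loss=\E_{q\in\Q}[|q_1|^\lambda\inner{q,r'}^2]$, and expanding $\inner{q,r'}^2=\sum_{i,j}r'_ir'_j\,q_iq_j$ reduces the whole problem to evaluating the moment tensor $M_{ij}=\E_q[|q_1|^\lambda q_iq_j]$.

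Next I would show $M$ is diagonal. Under each coordinate sign flip $q_\ell\mapsto-q_\ell$ with $\ell\neq1$ — which leaves both the measure and the factor $|q_1|^\lambda$ invariant — every off-diagonal entry $M_{ij}$ with $i\neq j$ becomes an odd function of some coordinate $q_\ell$, $\ell\neq1$, and therefore vanishes. By permutation symmetry among the coordinates $2,\dots,d$, the surviving diagonal takes just two values: $A=\E[|q_1|^{\lambda+2}]$ in the $e_1$ slot and $B=\E[|q_1|^\lambda q_2^2]$ elsewhere. The loss then collapses to $A(r'_1)^2+B(\norm{r'}^2-(r'_1)^2)=B\norm{r'}^2+(A-B)\norm{\proj_r r'}^2$.

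The crux — and the step I expect to be the main obstacle, since it is a genuine computation rather than a symmetry argument — is to verify that $A=(1+\lambda)B$, which is exactly what forces the ratio of coefficients to equal $\lambda$. I would compute these moments either from the marginal density of a single sphere coordinate, proportional to $(1-t^2)^{(d-3)/2}$ on $[-1,1]$, which turns $\E[|q_1|^s]$ into a Beta integral, or more cleanly via the conditional identity $\E[q_2^2\mid q_1]=(1-q_1^2)/(d-1)$, which gives $B=(C-A)/(d-1)$ with $C=\E[|q_1|^\lambda]$. Either route, after simplifying the resulting Gamma-function ratios with $\Gamma(x+1)=x\Gamma(x)$, yields $A/C=(\lambda+1)/(\lambda+d)$ and $B=C/(\lambda+d)$, hence $A/B=\lambda+1$ as required.

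Substituting $A-B=\lambda B$ back gives $\Loss=B\bigl(\norm{r'}^2+\lambda\norm{\proj_r r'}^2\bigr)$. Since $B>0$ depends only on $\lambda$ and $d$ and not on the free variable $r'$, absorbing it as the proportionality constant establishes the claim.
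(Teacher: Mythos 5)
Your proof is correct, and it reaches the theorem by a genuinely different route than the paper. The paper conditions on the angle $\theta$ between $q$ and $r$: it splits $q$ and $r'$ into components parallel and orthogonal to $r$, evaluates $\E\left[\inner{q,r'}^2 \mid \cos\theta\right]$, integrates against the surface-area factor $\sin^{d-2}\theta$, and extracts the parallel/perpendicular coefficient ratio from an integration-by-parts recursion $(d+\lambda)I_d=(d-1)I_{d-2}$ for $I_d=A\int_0^\pi w(\cos\theta)\sin^d\theta\,d\theta$. You instead rotate $r$ onto $e_1$, reduce the loss to the moment tensor $M_{ij}=\E\left[|q_1|^\lambda q_iq_j\right]$, kill the off-diagonal entries by sign-flip symmetry, and compute the two distinct diagonal values $A=\E[|q_1|^{\lambda+2}]$ and $B=\E[|q_1|^\lambda q_2^2]$ in closed form via the marginal density $(1-t^2)^{(d-3)/2}$ and Gamma-function identities; your computations check out ($A/C=(\lambda+1)/(\lambda+d)$, $B=C/(\lambda+d)$, hence $A=(\lambda+1)B$), and this is exactly the same quantitative fact the paper's recursion delivers. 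What each buys: your version makes the algebraic structure explicit --- the weighted second-moment matrix is diagonal with exactly two eigenvalues, whose ratio is $\lambda+1$ --- and replaces the recursion with standard one-coordinate sphere moments; the paper's version avoids Gamma functions entirely, staying with elementary Wallis-style integration by parts, and deliberately parallels the anisotropic-loss analysis of ScaNN that it cites. One small caveat in your write-up: the conditional identity $\E[q_2^2\mid q_1]=(1-q_1^2)/(d-1)$ alone gives only $B=(C-A)/(d-1)$ and does not by itself determine $A/B$, so that ``route'' still requires the Beta-integral evaluation of $A/C$; you implicitly acknowledge this (both routes end in Gamma-function simplification), but it should be stated that the two routes differ only in whether $B$ is computed directly or eliminated.
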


\begin{proof}
See Appendix~\ref{sec:app_loss}.
\end{proof}

By leveraging Theorem~\ref{thm:loss}, we may efficiently perform spilled VQ assignment using our new loss by computing the squared $\ell_2$ distance to each centroid, adding a penalty term for parallelism between the original residual $r$ and the candidate residual $r'$, and taking the $\argmin$ among all centroids. We present the following statements to help develop intuition about the SOAR loss:

\begin{corollary}
For the uniform weight function $w(t)=1=|t|^0$, the SOAR loss is equivalent to standard Euclidean assignment: $\Loss(r',r,\Q)\propto \norm{r'}^2$.
\end{corollary}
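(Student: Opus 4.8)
The plan is to obtain this statement as the $\lambda = 0$ instance of Theorem~\ref{thm:loss}. First I would note that the uniform weight $w(t) = 1$ coincides with the parametrized family $w(t) = |t|^\lambda$ precisely when $\lambda = 0$ (adopting the convention $0^0 = 1$, which matters only at the single point $t = 0$). Substituting $\lambda = 0$ into the theorem's conclusion $\Loss(r', r, \Q) \propto \norm{r'}^2 + \lambda \norm{\proj_r r'}^2$, the orthogonality-amplification term $\lambda \norm{\proj_r r'}^2$ vanishes identically, leaving $\Loss(r', r, \Q) \propto \norm{r'}^2$, which is exactly the standard Euclidean assignment objective.

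Alternatively, I could give a short self-contained argument directly from the loss definition in Equation~\ref{eq:loss}, bypassing the theorem entirely. Setting $w \equiv 1$ collapses the loss to $\Loss(r', r, \Q) = \E_{q \in \Q}[\inner{q, r'}^2]$. Since $q$ is uniform on the unit hypersphere, rotational symmetry forces $\E[q q^\top] = \frac{1}{d} I$, so $\E_{q}[\inner{q, r'}^2] = r'^\top \E[q q^\top] r' = \frac{1}{d} \norm{r'}^2 \propto \norm{r'}^2$. This recovers the claim and doubles as a sanity check on the proportionality constant implicit in the general theorem.

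The main obstacle here is essentially nonexistent: the result is an immediate specialization, and the only point deserving a remark is verifying that the $\lambda = 0$ substitution is legitimate despite the $0^0$ ambiguity in $|t|^\lambda$ at $t = 0$. This is harmless because $\{\cos\theta = 0\}$ is a measure-zero event under the uniform sphere distribution, so the value of $w$ there cannot affect the expectation. I would close by emphasizing the interpretive payoff: the corollary confirms that $\lambda$ interpolates between plain VQ assignment ($\lambda = 0$) and increasingly orthogonality-amplified assignment ($\lambda > 0$), grounding the claim that SOAR strictly generalizes the Euclidean baseline.
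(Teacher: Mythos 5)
Your proposal is correct and follows the same route as the paper, which presents this corollary as an immediate specialization of Theorem~\ref{thm:loss} at $\lambda=0$ (the paper gives no separate proof, treating the substitution as self-evident). Your supplementary direct computation via $\E[qq^\top]=I_d/d$ is also valid --- it reuses the same spherical-symmetry fact the paper invokes in its Pearson-correlation lemma --- and your measure-zero remark about the $0^0$ point is a fair, if unneeded, bit of care.
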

The parameter $\lambda$ controls how much the spilled assignment $\pi'$ should prioritize quantization performance when the original assignment performs poorly, relative to general quantization performance. As $\lambda$ is increased, the former is further emphasized; at $\lambda=0$, only the latter is considered, leading to standard Euclidean assignment. See Figure~\ref{fig:lambda-effect} in Experiments for a visualization of this tradeoff.
\begin{corollary}
For a fixed $\norm{r'}$, $\Loss$ is minimized when $r$ and $r'$ are orthogonal, in which case $\lambda\norm{\proj_r r'}^2=0$ and $\Loss(r',r,\Q)\propto\norm{r'}^2$.
\end{corollary}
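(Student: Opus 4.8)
The plan is to treat this as an immediate consequence of Theorem~\ref{thm:loss}, since that result has already reduced the loss to a closed form in $r'$ and $r$. Invoking it, I would write $\Loss(r',r,\Q)\propto\norm{r'}^2+\lambda\norm{\proj_r r'}^2$, a sum of two manifestly nonnegative terms (recall $\lambda\ge0$ and squared norms are nonnegative). Under the hypothesis that $\norm{r'}$ is held fixed, the first term is constant, so minimizing $\Loss$ over the remaining directional freedom of $r'$ collapses to minimizing the single penalty term $\lambda\norm{\proj_r r'}^2$.

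The second step is to minimize that penalty. Because it is nonnegative, its smallest possible value is zero, attained exactly when $\proj_r r'=0$. Using the explicit scalar projection $\proj_r r'=\tfrac{\inner{r,r'}}{\norm{r}^2}\,r$ (well defined since $r\neq0$), the condition $\proj_r r'=0$ is equivalent to $\inner{r,r'}=0$, i.e.\ to $r$ and $r'$ being orthogonal. At any such minimizer the penalty vanishes and we are left with $\Loss(r',r,\Q)\propto\norm{r'}^2$, as claimed. Conceptually, the orthogonal decomposition $\norm{r'}^2=\norm{\proj_r r'}^2+\norm{r'-\proj_r r'}^2$ makes clear that at fixed total norm we are free to trade the parallel component against the orthogonal one, and the loss rewards pushing all of the norm into the orthogonal part.

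There is essentially no analytical obstacle here — the heavy lifting was done in Theorem~\ref{thm:loss}, and what remains is the observation that adding a nonnegative penalty to a fixed quantity is minimized by driving that penalty to zero. The only points worth stating carefully are that $\lambda$ is nonnegative (so the penalty cannot be pushed below zero), that $r\neq0$ so the projection is defined, and that the orthogonal configuration is genuinely attainable under the fixed-norm constraint: in any dimension $d\ge2$ there exist vectors of the prescribed norm orthogonal to $r$, so the minimum is achieved rather than being an unattained infimum.
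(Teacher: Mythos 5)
Your proposal is correct and follows exactly the route the paper intends: the corollary is stated as an immediate consequence of Theorem~\ref{thm:loss}, with the fixed $\norm{r'}^2$ term set aside and the nonnegative penalty $\lambda\norm{\proj_r r'}^2$ driven to its minimum of zero precisely when $\inner{r,r'}=0$. Your additional remarks on attainability ($d\ge2$), $r\neq 0$, and $\lambda\ge 0$ are careful touches the paper leaves implicit, but they do not change the argument.
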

Our technique's name originates from the observation that residuals are encouraged to be orthogonal.

\begin{lemma}
$\norm{\proj_r r'}=\norm{r'}\cdot\rho_{\inner{q,r},\inner{q,r'}}$,
where $\rho$ is the Pearson correlation coefficient, and the correlation is computed over $q$ uniformly distributed over the hypersphere. 
\end{lemma}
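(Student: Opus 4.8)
The plan is to reduce both sides of the identity to $\cos\theta_{rr'}$, the cosine of the angle between the fixed vectors $r$ and $r'$, via the second-moment structure of a sphere-uniform query. First I would note that because $q$ is uniform on the unit sphere it is symmetric about the origin, so $\E[q]=0$ and both $\inner{q,r}$ and $\inner{q,r'}$ are mean-zero random variables. Consequently the covariance collapses to a bare second moment, $\Cov(\inner{q,r},\inner{q,r'})=\E[\inner{q,r}\inner{q,r'}]$, and the two variances appearing in the Pearson denominator reduce to $\E[\inner{q,r}^2]$ and $\E[\inner{q,r'}^2]$.

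The crux of the argument is the isotropy identity $\E[q q^\top]=\tfrac{1}{d}I$. I would justify it purely from symmetry: the off-diagonal entries $\E[q_i q_j]$ with $i\ne j$ vanish because $q$ is invariant under flipping the sign of any single coordinate, the diagonal entries $\E[q_i^2]$ are all equal by permutation invariance, and they must sum to $\E[\norm{q}^2]=1$ since $q$ lies on the unit sphere, forcing each to equal $1/d$. With this in hand, $\E[\inner{q,r}\inner{q,r'}]=r^\top\E[qq^\top]r'=\tfrac1d\inner{r,r'}$, and likewise $\E[\inner{q,r}^2]=\tfrac1d\norm{r}^2$ and $\E[\inner{q,r'}^2]=\tfrac1d\norm{r'}^2$. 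The common factor $1/d$ then cancels in the correlation coefficient, leaving $\rho_{\inner{q,r},\inner{q,r'}}=\inner{r,r'}/(\norm{r}\,\norm{r'})=\cos\theta_{rr'}$.

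It remains only to match this against the geometric side. Using $\proj_r r'=\tfrac{\inner{r,r'}}{\norm{r}^2}\,r$ gives $\norm{\proj_r r'}=|\inner{r,r'}|/\norm{r}=\norm{r'}\,|\cos\theta_{rr'}|$, which is exactly $\norm{r'}$ times the correlation computed above. The one genuine obstacle is establishing the isotropy identity; everything else is bookkeeping that follows once the $1/d$ factors cancel. I would also flag the sign subtlety: a norm is nonnegative whereas $\rho$ may be negative, so the equality is literally $\norm{\proj_r r'}=\norm{r'}\,|\rho|$; however, since this quantity enters Theorem~\ref{thm:loss} only through $\norm{\proj_r r'}^2$, the sign is immaterial for the loss.
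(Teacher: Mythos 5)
Your proof is correct and follows essentially the same route as the paper's: zero means by spherical symmetry, the isotropy identity $\E[qq^T]=I_d/d$ (which you justify from coordinate symmetries rather than citing as well known), cancellation of the $1/d$ factors, and the projection formula $\norm{\proj_r r'}=|\inner{r,r'}|/\norm{r}$. Your sign caveat is a genuine observation that applies equally to the paper's own argument---its final step $\inner{\frac{r}{\norm{r}},\frac{r'}{\norm{r'}}}=\frac{1}{\norm{r'}}\norm{\proj_r r'}$ implicitly assumes $\inner{r,r'}\ge 0$, so strictly the identity should read $\norm{r'}\cdot|\rho|$---but, as you note, only $\norm{\proj_r r'}^2$ enters the loss of Theorem~\ref{thm:loss}, so nothing downstream is affected.
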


The above lemma shows that in addition to the weighted quantization error derivation in Equation~\ref{eq:loss}, we may also interpret the SOAR loss as adding a penalization term for correlation in quantized score error, scaled by the magnitude of the quantization error.

\begin{proof}
$$
  \rho_{\inner{q,r}, \inner{q,r'}}
  =\dfrac{\Cov[\inner{q,r}, \inner{q,r'}]}{\sigma_{\inner{q,r}} \sigma_{\inner{q,r'}}}
  =\dfrac{\E[\inner{q,r}\inner{q,r'}]-\E[\inner{q,r}]\E[\inner{q,r'}]}{\sigma_{\inner{q,r}} \sigma_{\inner{q,r'}}},
$$
and by symmetry over the hypersphere, $\E[\inner{q,r}]=\E[\inner{q,r'}]=0$ so our expression simplifies to
$$
  \rho_{\inner{q,r}, \inner{q,r'}}
  =\dfrac{\E[\inner{q,r}\inner{q,r'}]}{\sigma_{\inner{q,r}} \sigma_{\inner{q,r'}}}
  =\dfrac{\E[r^Tqq^Tr']}{\sigma_{\inner{q,r}} \sigma_{\inner{q,r'}}}
  =\dfrac{r^T\E[qq^T]r'}{\sigma_{\inner{q,r}} \sigma_{\inner{q,r'}}}.
$$
It's well known that for uniformly distributed, unit-norm $q\in\R^d$ that $\E[qq^T]=I_d/d$, where $I_d$ is the $d$-dimensional identity matrix, and that $\sigma_{\inner{q,v}}=\norm{v}/\sqrt{d}$ for any fixed $v$. We leverage this to get
$$
  \rho_{\inner{q,r}, \inner{q,r'}}
  =\dfrac{r^T (I_d/d) r'}{(\norm{r}/\sqrt{d})(\norm{r'}/\sqrt{d})}
  =\dfrac{r^Tr'}{\norm{r}\cdot\norm{r'}}
  =\inner{\frac{r}{\norm{r}}, \frac{r'}{\norm{r'}}}
  =\frac{1}{\norm{r'}}\norm{\proj_r r'}.
$$
\end{proof}

\subsection{Implementation considerations}\label{sec:method-impl}
SOAR comes with some memory overhead relative to a non-spilled VQ MIPS index, due to its multiple assignments causing some duplication of data. The overhead is quite negligible because only the product-quantized (PQ) \cite{ieee_pq} datapoint representation, not the datapoint's highest-bitrate representation (usually int8 or float32), is duplicated. Since PQ is much more compressed than int8 or float32, duplicating the PQ data for each of a datapoint's spilled assignments only marginally increases memory footprint, as illustrated in Figure~\ref{fig:mem-layout}. This can be quantified analytically:
\begin{itemize}
\item SOAR increases memory consumption by $4+\frac{d}{2s}$ bytes/datapoint, assuming 16 centers per subspace (usually chosen for amenability to SIMD) and $s$ dimensions per PQ subspace.
\item The datapoint's original assignment already contained a PQ representation occupying $4+\frac{d}{2s}$ bytes, and its highest-bitrate representation requires $d$ bytes (int8) or $4d$ bytes (float32).
\end{itemize}
When the ANN index's highest-bitrate datapoint representation is int8 or float32, the relative index size growth is therefore $\dfrac{4+\frac{d}{2s}}{d+4+\frac{d}{2s}}\approx\dfrac{\frac{d}{2s}}{d+\frac{d}{2s}}=\dfrac{1}{2s+1}$ or $\dfrac{4+\frac{d}{2s}}{4d+4+\frac{d}{2s}}\approx\dfrac{\frac{d}{2s}}{4d+\frac{d}{2s}}=\dfrac{1}{8s+1}$, respectively. Table~\ref{table:mem} empirically corroborates these estimates and shows the memory cost is indeed quite low (5\%-20\% relative increase over a non-spilled VQ index).

\begin{figure}[b]
    \centering
    \includegraphics[width=\textwidth]{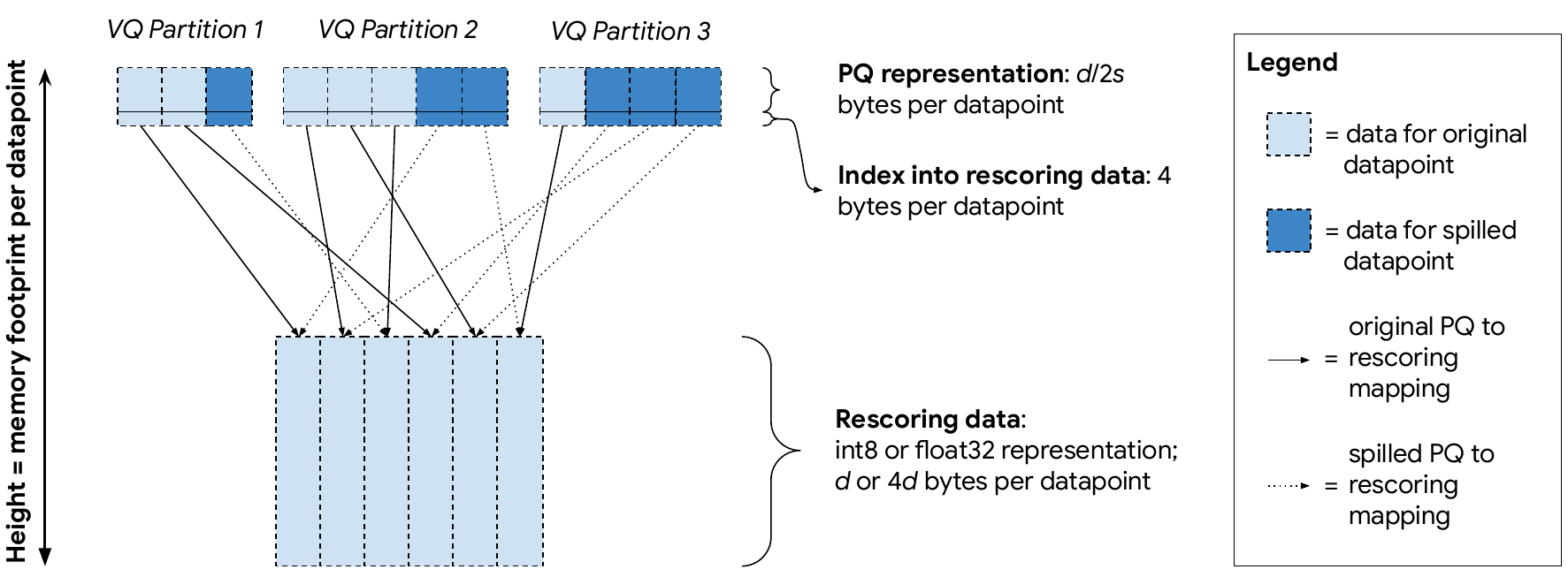}
    \caption{Memory layout changes for a SOAR-enabled ANN index. Memory footprint is proportional to area of colored cell. VQ centroid data (not shown) remains unchanged. We can see that the additional memory occupied by SOAR (dark blue) is low relative to the total memory consumption.}
    \label{fig:mem-layout}
\end{figure}

SOAR also results in a small amount of query-time CPU overhead over a non-spilled VQ index, due to SOAR's need for deduplication: a single datapoint may now appear in multiple partitions that are all searched. This overhead is low, though, because VQ inverted index performance is primarily bottlenecked by memory accesses to the VQ partition data, and this access pattern remains the same.

Finally, SOAR requires a minor tweak to the indexing pipeline. Creating a SOAR-enabled index first requires training a standard, non-spilled VQ index as usual. This gets used to populate the primary assignments, but is also used to calculate the residual $r$, which is then used in the SOAR loss to select further, spilled assignments. Other parts of the ANN indexing pipeline remain unchanged.

Overall, SOAR's search efficiency gains greatly outweigh its memory, CPU, and indexing overheads.

\subsubsection{Spilling to further centroids}
The analysis conducted in Section~\ref{sec:method-soar} focused specifically on the case of assigning to two partitions, $\pi$ and $\pi'$. This can be generalized to an arbitrary number of further assignments, where each subsequent assignment is done with a loss considering the distribution of quantized score errors from all prior assignments. We forgo this generalization in experiments and focus on the two-assignment setup because further assignments have diminishing returns; the first spilled assignment is generally sufficient to handle the cases where the primary assignment has high quantized score error, so further assignments are only needed in rare, doubly pathological situations. Meanwhile, the additional memory and indexing cost increases linearly with further assignments; two assignments strikes a good balance between these costs and KMR benefits while also keeping implementation simple.

\section{Related Works}
\subsection{Spill trees}
Spill trees (or sp-trees) were originally introduced in \cite{nips2004_spill}, and also assign datapoints to multiple vertices at the same level of the tree, just as SOAR may assign a datapoint to multiple VQ partitions. However, there are significant differences; spill trees were searched using \textit{defeatist search}, a greedy, no-backtracking strategy that took the most promising root-to-leaf path. SOAR indices utilize backtracking in their search procedure; multiple partitions, not just the top one, are evaluated further.

Spill trees also differ in how they perform assignment; their multiple assignments occur at each level of the tree, such that a single datapoint could appear in exponentially many leaves, with respect to the tree depth (which was significant, as spill trees are binary and therefore rather deep). This led to a large (sometimes >100x \citep{icmr2011_nvtree}) storage overhead, which was very costly. In contrast, SOAR performs the multiple assignments at individual levels of the tree, leading to a storage overhead that is constant with respect to the tree depth, and much lower (typically 10\%-20\%; see Table~\ref{table:mem}). Finally, SOAR leverages its custom, spilling-aware assignment loss to decide which among the many VQ partitions to spill to, while the spill tree, as a consequence of its binary structure, only had to make the binary decision of whether to spill at all.

\subsection{Graph-based algorithms}
Graph-based approaches to ANN search, such as \citep{tpami2020_hnsw}, have received significant amounts of recent research attention, and a number of such algorithms are featured in comparisons in Section~\ref{sec:exp_e2e}. If one considers datapoints and partition centers to be vertices, and datapoint-to-partition assignments to be edges, a traditional VQ index would form a tree structure, while spilled VQ assignment (including SOAR) would transform the tree into a cycle-containing general graph data structure. In this sense, SOAR makes VQ indices more closely resemble graph-based ANN indices.

However, graph-based ANN algorithms (as defined in the traditional sense) have no concept of partition centers, and only add edges between datapoints themselves, leading to a lack of hierarchy and linearizability. SOAR inherits the linearizability of tree-based approaches, leading to predictable and sequential memory access patterns that are necessary for achieving maximal performance on modern hardware. SOAR also inherits the small index sizes more easily achievable with tree-based methods. When viewed as a graph, a SOAR index only uses two edges (two partition assignments) per datapoint, while traditional graph-based indices typically use dozens of edges per datapoint, which incurs significant storage overhead that SOAR avoids.

\section{Experiments}
\subsection{KMR curve comparison}\label{sec:exp_kmr}
The KMR curve is a deterministic and easily computable metric for VQ-style ANN indices that's highly predictive of real-world performance. Below, we plot $\Kmr(t)$ as defined in Equation~\ref{eq:kmr}, but rather than plotting with respect to $t$, we plot with respect to the sum of the sizes of the $t$ top-ranked partitions. This partition size weighting is necessary because spilled VQ data structures have more points per partition due to their multiple assignments, making each individual partition more expensive to search. The results are shown in Figure~\ref{fig:kmr}; further details are given in Appendix~\ref{sec:app_kmr}.

\begin{figure}[h]
\centering
\begin{subfigure}[b]{0.32\textwidth}
  \centering
  \includegraphics[width=\textwidth]{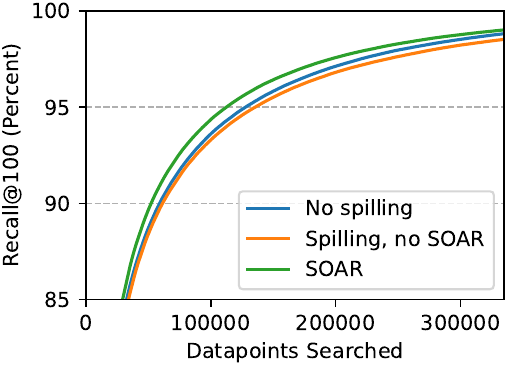}
  \caption{Glove-1M}
  \label{fig:kmr-glove}
\end{subfigure}
\hfill
\begin{subfigure}[b]{0.32\textwidth}
  \centering
  \includegraphics[width=\textwidth]{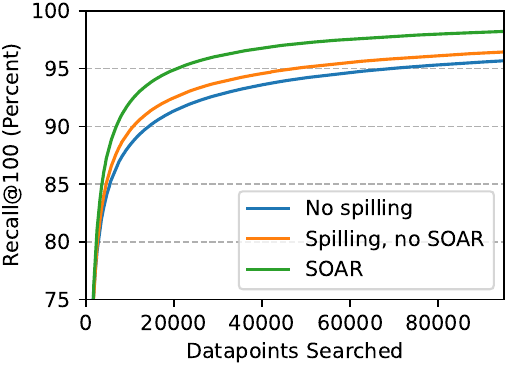}
  \caption{Microsoft SPACEV}
  \label{fig:kmr-mss}
\end{subfigure}
\hfill
\begin{subfigure}[b]{0.32\textwidth}
  \centering
  \includegraphics[width=\textwidth]{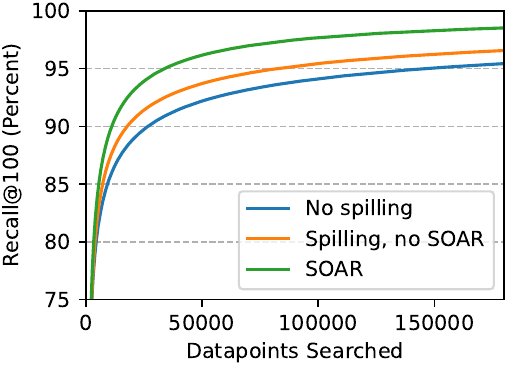}
  \caption{Microsoft Turing-ANNS}
  \label{fig:kmr-mst}
\end{subfigure}
\caption{SOAR allows a VQ index to achieve a given recall target while reading fewer total datapoints and therefore utilizing less memory bandwidth, increasing ANN search performance. SOAR's improvements are especially large on Microsoft SPACEV and Microsoft Turing-ANNS, the billion-scale datasets in this experiment; see Section~\ref{sec:exp_size} for further analysis of dataset size.}
\label{fig:kmr}
\end{figure}

\subsection{Correlation analysis}
Here, we look at a number of statistics relating to the quantized score error that shed light on how SOAR improves KMR and overall VQ index quality. These statistics were computed on the Glove-1M dataset with the SOAR $\lambda$ set to 1; see source code in supplementary materials for more details.

Figure~\ref{fig:soar-angle-angle} looks at the query-residual angle, where the residuals are taken from the nearest neighbor results for each query. The scatterplot looks at $\cos\theta$ from the primary VQ assignment and $\cos\theta'$ from the spilled VQ done with SOAR. In comparison with the equivalent scatterplots in Figure~\ref{fig:a1a2} done without SOAR, we can see that SOAR significantly reduces the correlation between the cosines of the two angles, thus increasing the efficacy of multiple VQ assignments.

\begin{figure}[h]
\centering
\begin{minipage}{.47\textwidth}
  \centering
  \includegraphics[width=\textwidth]{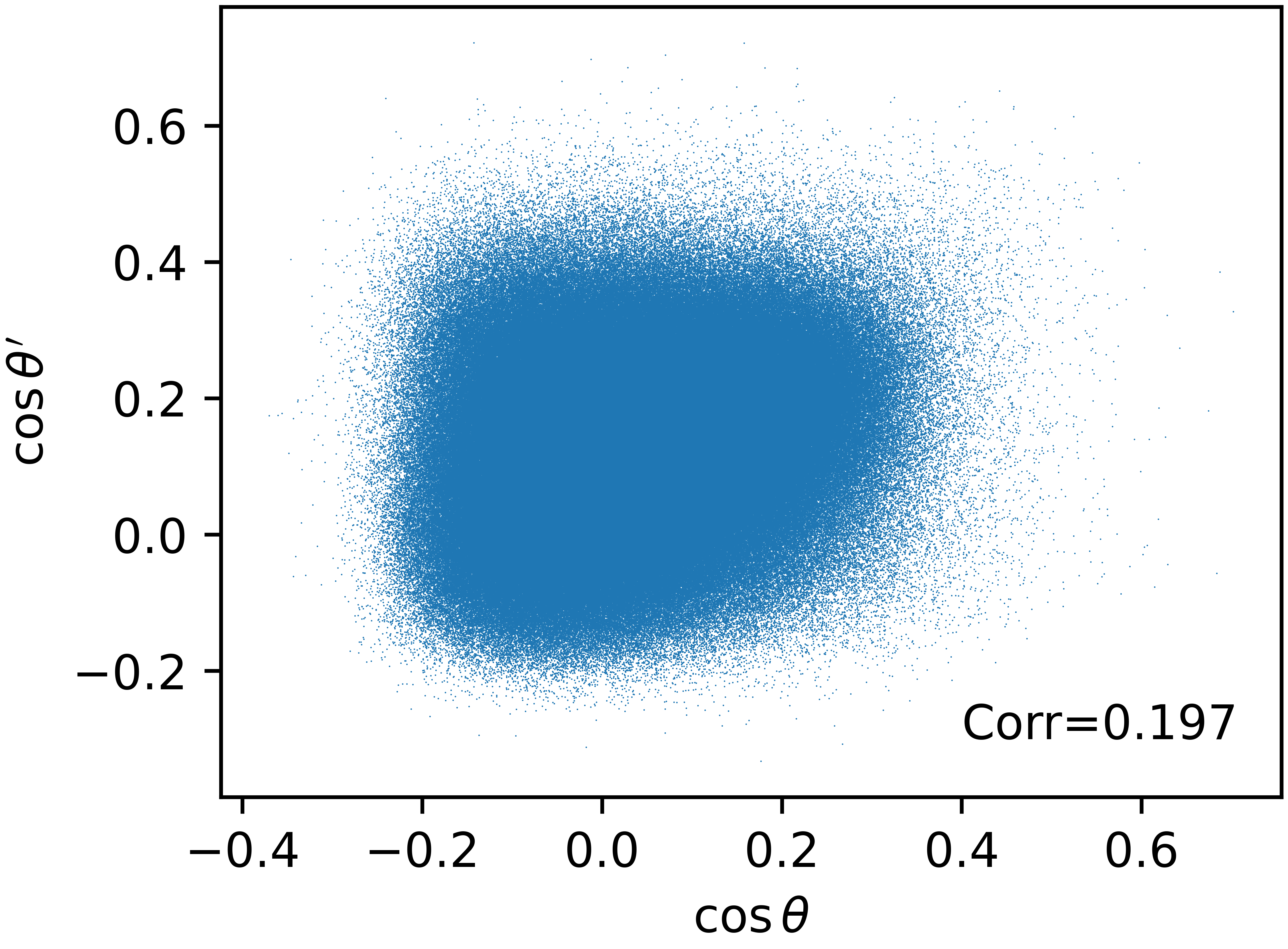}
  \captionof{figure}{Scatter plot, analogous to Figure~\ref{fig:a1a2-spill}, but with spilled assignments performed using SOAR loss; angular correlation is much lower.\newline}
  \label{fig:soar-angle-angle}
\end{minipage}\hfill\begin{minipage}{.47\textwidth}
  \centering
  \includegraphics[width=\textwidth]{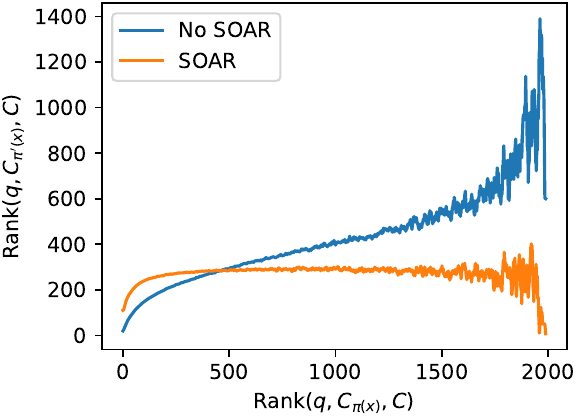}
  \captionof{figure}{The nearest neighbors most difficult to find under partitioning $\pi$, as quantified by $\Rank(q,\Codebook_{\pi(x)},\Codebook)$, also tend to be difficult to find under $\pi'$ when SOAR isn't used.}
  \label{fig:rank-rank}
\end{minipage}
\end{figure}

The end result of this reduced correlation can be seen in Figure~\ref{fig:rank-rank}, which plots the mean rank of a nearest neighbor's spilled centroid against the rank of that neighbor's primary centroid. We can see that without SOAR, when a nearest neighbor's primary partition $\pi(x)$ ranks poorly, the nearest neighbor's spilled partition $\pi'(x)$ tends to also rank poorly, meaning the spilled assignment is doing little to help ANN performance; the ANN algorithm will still have to search through many partitions to find $x$. With SOAR, $\Rank(q,\Codebook_{\pi'(x)},\Codebook)$ remains low, allowing the ANN algorithm to search significantly fewer partitions to find the nearest neighbors.

\begin{wrapfigure}{r}{0.4\textwidth}
  \vspace{-4mm}\begin{center}
    \includegraphics[width=0.4\textwidth]{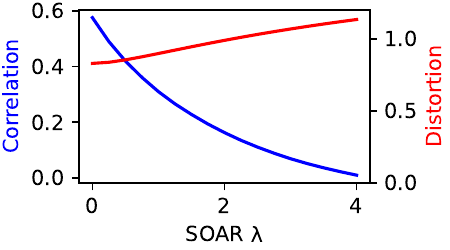}
  \end{center}
  \caption{Raising SOAR $\lambda$ increases VQ distortion $\E[\norm{r'}^2]$, but lowers score correlation $\rho_{\inner{q,r},\inner{q,r'}}$.}
  \label{fig:lambda-effect}
\end{wrapfigure}

Next, in Figure~\ref{fig:lambda-effect}, we investigate the effect of SOAR's $\lambda$ parameter on the resulting VQ index. We can see that as $\lambda$ is increased, the resulting partitioning $\pi'$ from SOAR will have quantized score error $\inner{q,r'}$ less and less correlated with the quantized score error $\inner{q,r}$ from $\pi$. This is beneficial for ANN search efficiency, but on the other hand, $\E\left[\norm{x-\Codebook_{\pi'(x)}}^2\right]=\E\left[\norm{r'}^2\right]$ increases with respect to $\lambda$. The diminishing relative importance of $\norm{r'}^2$ as $\lambda$ increases is a natural consequence of Theorem~\ref{thm:loss}, but nonetheless results in a general increase in magnitude of $\inner{q,r'}$ and is detrimental to ANN search accuracy; setting the optimal $\lambda$ requires balancing these effects.

\subsection{Effects of dataset size and recall target}\label{sec:exp_size}
\begin{wrapfigure}{r}{0.5\textwidth}
  \vspace{-5mm}\begin{center}
    \includegraphics[width=0.5\textwidth]{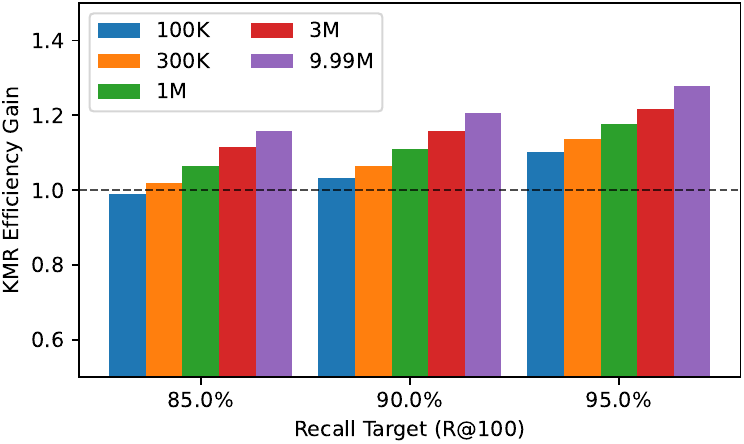}
  \end{center}
  \caption{SOAR's ANN search efficiency benefit grows as SOAR is applied to larger datasets, and as the ANN recall target rises.}
  \label{fig:size-effect}
\end{wrapfigure}

In Figure~\ref{fig:size-effect}, we take various size samples from the \texttt{ann-benchmarks.com} DEEP dataset and compare the ratio of how many datapoints a SOAR VQ index must access compared to a standard VQ index to achieve the same recall. This ratio is the dependent variable in Figure~\ref{fig:size-effect}; a higher ratio indicates a greater improvement from SOAR. We can see that higher recall targets, and larger samples, lead to greater ratios.

We maintain a ratio of 400 datapoints per partition across all sample sizes (for example, the 1M datapoint sample was trained with 2500 partitions). This sampling-based approach was chosen over using various datasets of differing sizes, to eliminate effects due to varying search difficulties among datasets.

\subsection{End-to-end recall-speed benchmarks}\label{sec:exp_e2e}
\begin{wrapfigure}{r}{0.5\textwidth}
  \vspace{-6mm}\begin{center}
    \includegraphics[width=0.5\textwidth]{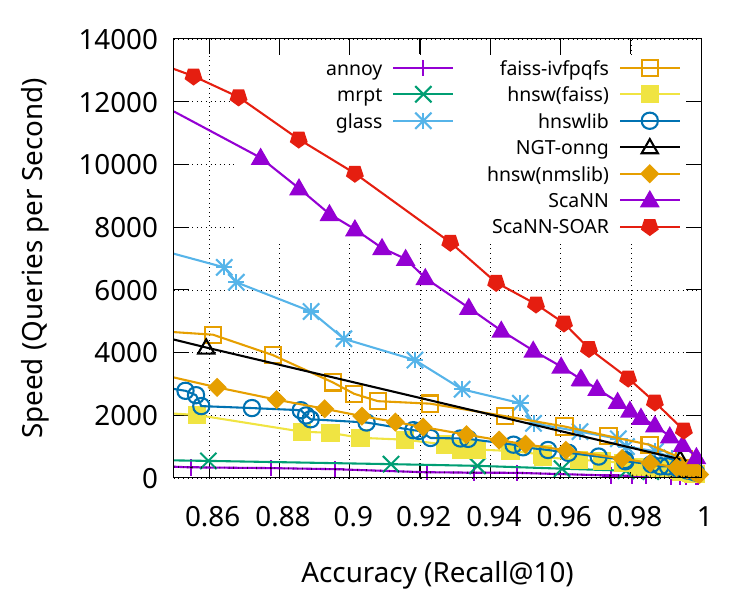}
  \end{center}
  \caption{SOAR + ScaNN on Glove-1M.}
  \vspace{-4mm}
  \label{fig:e2e-glove}
\end{wrapfigure}

ANN search algorithms are ultimately compared by their ability to trade off ANN search accuracy (measured by recall), and speed, measured by queries-per-second throughput. Here we perform this comparison between SOAR and state-of-the-art algorithms in two standardized benchmarking frameworks.

First, we modify ScaNN \cite{icml2020_avq} to utilize SOAR and benchmark its performance on a standardized testing environment with an Intel Xeon W-2135 processor and the test setup from \href{https://ann-benchmarks.com/}{ann-benchmarks.com} \cite{ann_bench}. The algorithms for comparison were installed, configured, and tuned according to their respective benchmark submissions, and run on the same environment; see Appendix~\ref{sec:app-glove} for more details. As shown in Figure~\ref{fig:e2e-glove}, SOAR improves upon ScaNN's already state-of-the-art performance.

Next, we benchmark an ANN search implementation using SOAR on a multi-level k-means clustering against submissions to Track 3 of \href{https://big-ann-benchmarks.com/}{big-ann-benchmarks.com} \cite{big_ann_bench}. These benchmarks allow the use of custom hardware, enabling greater flexibility in approaches to ANN search, but also complicating direct comparison due to the difficulty in accounting for varying hardware costs. In an effort to present existing benchmark submissions in a fair light, we compare SOAR against these submissions in two ways:
\begin{enumerate}
\item We compare the ratio of search throughput to hardware price. The ratios for competing algorithms were taken directly from the benchmark leaderboard, which was straightforward, but SOAR's results were from virtualized cloud compute, so the capital expenditure of buying a server with equivalent power had to be estimated. Hardware prices now may be slightly lower than prices used by other submissions in 2022, their time of submission; on the other hand, SOAR likely would've performed better if run on dedicated hardware, where no other jobs are competing for the processor's various shared resources.
\item We compare the ratio of search throughput to estimated monthly cloud billing cost for each submission's hardware. One benefit of this method is that hardware costs are completely equalized; in contrast, the alternate method of comparison rewarded contestants for finding especially discounted offerings for the same memory or compute. Additionally, the popularity of cloud infrastructure over self-hosting implies this ratio is what truly matters for many deployers of ANN search algorithms. However, we can't compare against submissions that use proprietary or deprecated hardware, and benchmark contestants never directly optimized for this ratio, because cloud cost was never used in any original benchmark leaderboard.
\end{enumerate}

Although neither comparison alone is perfect, the two in aggregate provide a solid characterization of SOAR's performance. Further experimental details are provided in Appendix~\ref{sec:app-e2e}, and results are shown in Figure~\ref{fig:e2e-big} below; our index leads under both cost metrics, and SOAR is critical for our index's performance, roughly doubling throughput over a traditional, non-spilled VQ index.

\begin{figure}[h]
\centering
\begin{subfigure}[b]{0.48\textwidth}
  \centering
  \includegraphics[width=\textwidth]{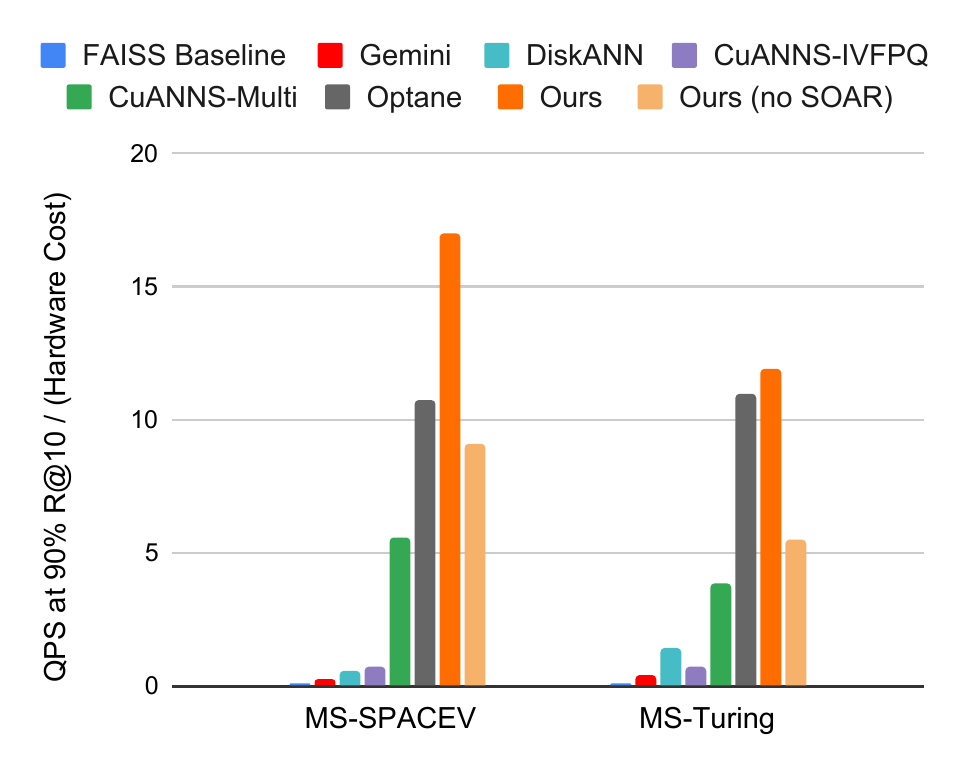}
  \caption{Throughput-to-capex ratio (higher is better)}
  \label{fig:e2e-big-capex}
\end{subfigure}
\hfill
\begin{subfigure}[b]{0.48\textwidth}
  \centering
  \includegraphics[width=\textwidth]{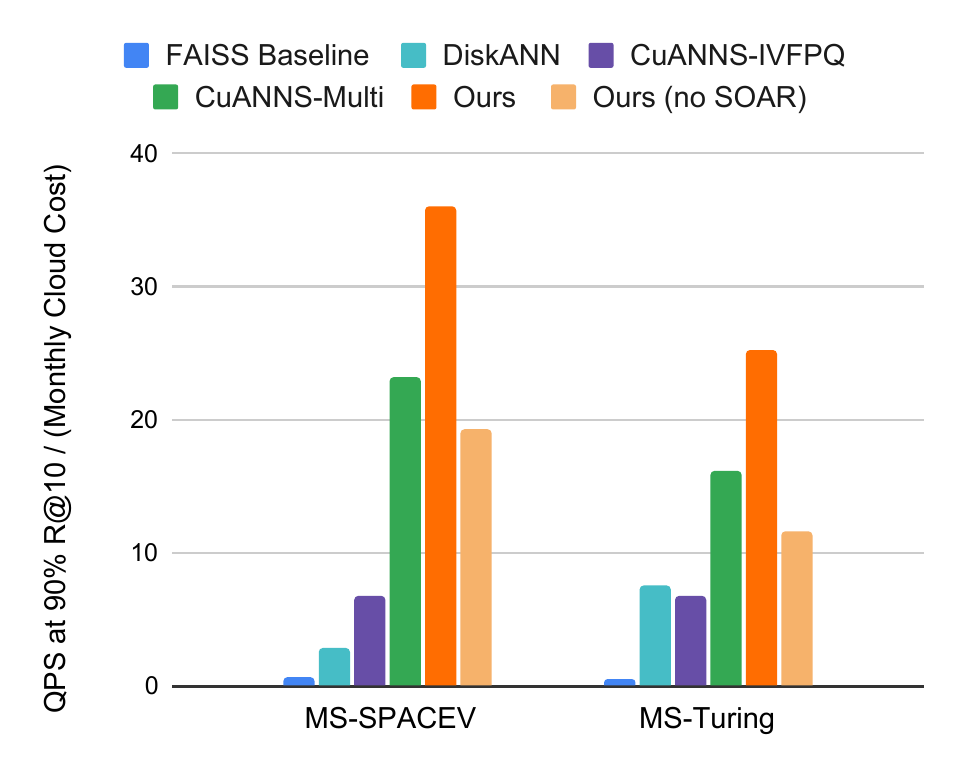}
  \caption{Throughput-to-cloud bill ratio (higher is better)}
\end{subfigure}
\caption{SOAR leads \href{https://big-ann-benchmarks.com/}{big-ann-benchmarks} performance under both methods of comparison.}
\label{fig:e2e-big}
\end{figure}

Finally, we present memory consumption figures for the ANN indices in these benchmarks in Table~\ref{table:mem}. The minor increase in memory usage is discussed and analyzed in Section~\ref{sec:method-impl}, and we find that analysis corroborates well with the empirical measurements below.
For all three datasets, the index size increase is very minor, and the throughput gains from SOAR could allow the same query traffic to be served by fewer server replicas, in fact potentially \textit{reducing} net memory consumption.

\begin{table}[h]
\begin{center}
\begin{tabular}{| c c c |}
  \hline
  Dataset & Memory Usage, No SOAR & Memory Usage with SOAR \\
  \hline
  Glove-1M & 453.5 MB & 488.4 MB (+7.7\%) \\
  Microsoft Turing-ANNS & 120.03 GB & 140.23 GB (+16.8\%) \\
  Microsoft SPACEV & 120.85 GB & 141.80 GB (+17.3\%) \\
  \hline
\end{tabular}
\vspace{1mm}
\caption{\label{table:mem}{ANN index memory consumption before/after SOAR.}}
\end{center}
\end{table}


{
\small
\bibliography{refs}
}


\newpage
\appendix
\section{Appendix}
\subsection{Proof of Theorem~\ref{thm:loss}}\label{sec:app_loss}
We may expand our expectation as follows:

\begin{align*}
    \Loss(r',r,\Q)
    &=\E_{q\in\Q}\left[w\left(\dfrac{\inner{q,r}}{\norm{r}\cdot\norm{q}}\right)\inner{q,r'}^2\right] \\
    &=\int_0^\pi w(\cos\theta)\E_{q\in\Q}\left[\inner{q,r'}^2 \mid \dfrac{\inner{q,r}}{\norm{r}\cdot\norm{q}}=\cos\theta\right] dP\left[\dfrac{\inner{q,r}}{\norm{r}\cdot\norm{q}}<\cos\theta\right].
\end{align*}

To evaluate this integral, first decompose $q$ and $r'$ into components parallel and orthogonal to $r$; let us call these components $q_\parallel$, $q_\perp$, $r'_\parallel$, and $r'_\perp$, respectively. This allows us to simplify the inner expectation:

\begin{align*}
    \E_{q\in\Q}\left[\inner{q,r'}^2 \mid \dfrac{\inner{q,r}}{\norm{r}\cdot\norm{q}}=\cos\theta\right] &= \E_{q\in\Q}[\inner{q,r'}^2 \mid \norm{q_\parallel}=\cos\theta] \\
    &=  \E_{q\in\Q}\left[\inner{q_\parallel + q_\perp, r'_\parallel + r'_\perp}^2 \mid \norm{q_\parallel}=\cos\theta\right] \\
    &= \E_{q\in\Q}\left[\left(\inner{q_\parallel, r'_\parallel} + \inner{q_\perp, r'_\perp}\right)^2 \mid \norm{q_\parallel}=\cos\theta\right] \\
    &= \E_{q\in\Q}[(\cos\theta\norm{r'_\parallel} + \inner{q_\perp, r'_\perp})^2 \mid \norm{q_\parallel}=\cos\theta] \\
    &=\begin{aligned}
    \cos^2\theta\norm{r'_\parallel}^2 &+ 2\cos\theta\norm{r'_\parallel}\inner{\E[q_\perp \mid \norm{q_\parallel}=\cos\theta], r_\perp'} \\
      &+ \E[\inner{q_\perp, r'_\perp})^2 \mid \norm{q_\parallel}=\cos\theta]
    \end{aligned} \\
    &= \cos^2\theta\norm{r'_\parallel}^2 + \sin^2\theta \norm{r'_\perp}^2/(d-1).
\end{align*}

Meanwhile, $dP\left[\dfrac{\inner{q,r}}{\norm{r}\cdot\norm{q}}<\cos\theta\right]$ is proportional to the surface area of a $(d-1)$-dimensional hypersphere of radius $\sin\theta$, which we may express as $A\sin^{d-2}\theta$ for some constant $A$. Our integral then becomes

\begin{align*}
    \Loss(r',r,\Q) &= \int_0^\pi w(\cos\theta)\left[\cos^2\theta\norm{r'_\parallel}^2 + \sin^2\theta \norm{r'_\perp}^2/(d-1)\right]A\sin^{d-2}\theta d\theta \\
    &= \left(A\int_0^\pi w(\cos\theta)\sin^{d-2}\theta\cos^2\theta\right)\norm{r'_\parallel}^2 + \\
    &\phantom{{}=} \left(A\int_0^\pi w(\cos\theta)\sin^d\theta\right)\norm{r'_\perp}^2 / (d-1).
\end{align*}

Now define $I_d=A\int_0^\pi w(\cos\theta)\sin^d\theta$; note that $\Loss(r',r,\Q)=(I_{d-2}-I_d)\norm{r'_\parallel}^2 + I_d\norm{r'_\perp}^2/(d-1)$. For the weight function $w(t)=\abs{t}^\lambda$, we find that $I_d$ has a recursive definition if we utilize integration by parts with $v=\cos\theta$ and $u=-\cos^{\lambda}\theta\sin^{d-1}\theta$:

\begin{align*}
  I_d
  &=A\int_0^\pi \abs{\cos\theta}^\lambda\sin^d\theta d\theta \\
  &=2A\int_0^{\pi/2}\cos^{\lambda}\theta\sin^d\theta d\theta\\
  &=-2A\cos^{\lambda+1}\theta\sin^{d-1}\theta\Big|_0^{\pi/2} \\
&\phantom{{}=}-2A\int_0^{\pi/2}\cos\theta\Big(-(d-1)\cos^{\lambda+1}\theta\sin^{d-2}\theta-\lambda\sin^d\theta\cos^{\lambda-1}\theta\Big)d\theta \\
  &=2A(d-1)\int_0^{\pi/2}\cos^{\lambda+2}\theta\sin^{d-2}\theta d\theta
    -2\lambda A\int_0^{\pi/2}\cos^{\lambda}\theta\sin^d\theta d\theta \\
  &=2A(d-1)\int_0^{\pi/2} \cos^{\lambda}(1-\sin^2\theta)\theta\sin^{d-2}\theta d\theta - \lambda I_d \\
  &=(d-1)I_{d-2} - (d-1)I_d - \lambda I_d.
\end{align*}

Combining terms, we have $(d+\lambda)I_d=(d-1)I_{d-2}$; this implies for our loss that

\begin{align*}
  \Loss(r',r,\Q)
  &=(I_{d-2}-I_d)\norm{r'_\parallel}^2 + I_d\norm{r'_\perp}^2/(d-1) \\
  &=\left(\dfrac{d+\lambda}{d-1}-1\right)I_d\norm{r'_\parallel}^2 + I_d\norm{r'_\perp}^2/(d-1) \\
  &= \frac{I_d}{d-1}\cdot\left((\lambda+1)\norm{r'_\parallel}^2+\norm{r'_\perp}^2\right) \\
  &= \frac{I_d}{d-1}\cdot\left(\lambda\norm{r'_\parallel}^2+\norm{r'}^2\right) \\
  &\propto\lambda\norm{r'_\parallel}^2+\norm{r'}^2.
\end{align*}

Note that $r'_\parallel=\proj_r r'$, so this is our desired result. This is very similar to the analysis behind Theorem 3.3 of \cite{icml2020_avq}.

\subsection{KMR curve comparison: further details}\label{sec:app_kmr}
The Glove-1M dataset came from \href{https://ann-benchmarks.com/}{ann-benchmarks.com} \cite{ann_bench}, while the Microsoft SPACEV and Microsoft Turing-ANNS datasets came from \href{https://big-ann-benchmarks.com/}{big-ann-benchmarks.com}.
\begin{itemize}
  \item Glove-1M was trained on an anisotropic loss \cite{icml2020_avq} with 2000 partitions, and SOAR was run with $\lambda=1$.
  \item The two billion-datapoint datasets were trained on an anisotropic loss with approximately 7.2 million partitions, and SOAR was run with $\lambda=1.5$.
\end{itemize}
Table~\ref{table:kmr} below presents the approximate number of datapoints needed to search in order to achieve various recall targets on these datasets, as well as SOAR's KMR gain over non-spilled VQ indices.

\begin{table}[h]
\begin{center}
\begin{tabular}{| c | c c c c | c |}
 \hline
 Dataset & \makecell{Recall Target\\(R@100)} & No Spilling & \makecell{Spilling,\\No SOAR} & SOAR & \makecell{KMR gain, SOAR \\ over No Spilling}\\ [0.5ex]
 \hline\hline
 \multirow{4}{*}{Glove-1M} & 80\% & 19983 & 21126 & 18352 & 1.09x \\ 
 & 85\% & 32639 & 34109 & 29369 & 1.11x \\
 & 90\% & 59063 & 61392 & 52292 & 1.13x \\
 & 95\% & 127819 & 135612 & 112488 & 1.14x \\
 \hline\hline
  \multirow{4}{*}{\makecell{Microsoft\\SPACEV}} & 80\% & 2850 & 2700 & 2450 & 1.16x \\ 
 & 85\% & 5550 & 4650 & 3700 & 1.50x \\
 & 90\% & 14350 & 10850 & 6950 & 2.06x \\
 & 95\% & 69050 & 47050 & 20550 & 3.36x \\
 \hline\hline
   \multirow{4}{*}{\makecell{Microsoft\\Turing-ANNS}} & 80\% & 4500 & 3950 & 3400 &1.32x \\ 
 & 85\% & 9350 & 7250 & 5600 & 1.67x \\
 & 90\% & 26700 & 17800 & 11150 & 2.39x \\
 & 95\% & 145900 & 82100 & 33800 & 4.32x \\
 \hline
\end{tabular}
\vspace{1mm}
\caption{\label{table:kmr}{KMR curve results in tabulated form; see Figure~\ref{fig:kmr} for graphs of this data.}}
\end{center}
\end{table}

\newpage
\subsection{\texttt{ann-benchmarks.com} Glove-1M benchmark details}\label{sec:app-glove}
The \texttt{ann-benchmarks.com} benchmark results came from a VQ-PQ index; the VQ index had 2000 partitions and used SOAR $\lambda=1$. Both VQ and PQ were trained with anisotropic loss. The PQ quantization was configured with 16 subspaces and $s=2$ dimensions per subspace, and the highest-bitrate representation of the datapoints was encoded in 32-bit floats. By the analysis from Section~\ref{sec:method-impl}, we would therefore expect SOAR to increase index size by $1/17\approx5.9\%$, which is quite close to the empirical measurement in Table~\ref{table:mem}.

\subsection{End-to-end recall-speed benchmarks: further details}\label{sec:app-e2e}
\subsubsection{General index setup}
The \texttt{big-ann-benchmarks.com} results used a multilayer VQ index; the lower VQ index had approximately 7.2 million partitions, and these partition centers were vector-quantized again to 40000 partitions. PQ-quantized forms of the VQ residuals were used as intermediate scoring stages. The highest-bitrate form of the dataset stored by the index was an INT8-quantized representation. All VQ, PQ, and INT8 quantizations were trained with anisotropic loss.

\subsubsection{Estimated cost of SOAR benchmark hardware}
The SOAR benchmark results came from running on a server using 32 vCPU (16 physical cores) on an Intel Cascade Lake generation processor with 150GB of memory. The Supermicro \href{https://store.supermicro.com/us_en/mainstream-server-1u-sys-510p-m.html}{SYS-510P-M} configured with:
\begin{itemize}
    \item 1 x Intel® Xeon® Silver 4314 Processor 16-Core 2.40 GHz 24MB Cache (135W)
    \item 6 x 32GB DDR4 3200MHz ECC RDIMM Server Memory (2Rx8 - 16Gb)
    \item 1 x 1TB 3.5" MG04ACA 7200 RPM SATA3 6Gb/s 128M Cache 512N Hard Drive
\end{itemize}
should provide an upper bound for the cost of the SOAR benchmark setup, because the Supermicro server as configured has a newer generation and higher-clocked processor, and more memory, than the actual virtualized hardware used in benchmarking SOAR. At the time of writing, such a Supermicro server costs \$2740.60. Combining this with the \href{https://github.com/harsha-simhadri/big-ann-benchmarks/blob/f146969aab6e524b7a5a11900d9eaa91c9c412d3/t3/LEADERBOARDS_PUBLIC.md}{\texttt{big-ann-benchmarks.com} results} gives us the following table, used to produce the plots in Figure~\ref{fig:e2e-big-capex}:

\begin{table}[t]
\begin{center}
\begin{tabular}{| c c c c |}
  \hline
  Algorithm &
  Hardware Cost &
  \makecell{MS-SPACEV \\
            Queries/Sec \\
            (90\% R@10, \\
            from \href{https://github.com/harsha-simhadri/big-ann-benchmarks/blob/8a2947a3c1dca5c9f41bac4cde71a001698e44cc/neurips21/t3/LEADERBOARDS_PUBLIC.md\#msspace-throughput-rankings}{\textcolor{blue}{\underline{here}}})
        } &
  \makecell{MS-Turing \\
            Queries/Sec \\
            (90\% R@10, \\
            from \href{https://github.com/harsha-simhadri/big-ann-benchmarks/blob/8a2947a3c1dca5c9f41bac4cde71a001698e44cc/neurips21/t3/LEADERBOARDS_PUBLIC.md\#msturing-throughput-rankings}{\textcolor{blue}{\underline{here}}})
        } \\
  \hline
  FAISS Baseline & \$22021.90 & 3265 & 2845 \\
  DiskANN & \$11742 & 6503 & 17201 \\
  Gemini & \$55726.66 & 16422 & 21780 \\
  CuANNS-IVFPQ & \$150000 & 108302 & 109745 \\
  CuANNS-Multi & \$150000 & 839749 & 584293 \\
  OptANNe GraphANN & \$14664.20 & 157828 & 161463 \\
  Ours & \$2740.60 & 46712 & 32608 \\
  \hline
\end{tabular}
\end{center}
\end{table}

\newpage
\subsubsection{Cloud cost details}
The monthly cloud infrastructure costs were derived from Google Compute Engine's on-demand pricing in the \texttt{us-central1} region. At the time of writing, the costs were \cite{gcp_price}:

\begin{table}[h!]
\begin{center}
\begin{tabular}{| c c |}
  \hline
  Item & Monthly Cost (USD) \\
  \hline
  1 vCPU & \$24.81 \\
  1GB Memory & \$3.33 \\
  1GB Local SSD & \$0.08 \\
  A100 80GB & \$2868.90 \\
  V100 16GB & \$1267.28 \\
  \hline
\end{tabular}
\end{center}
\end{table}

Using this information, we computed the monthly cloud spend required for each of the entries to Track 3 of the \texttt{big-ann-benchmarks} competition:

\begin{table}[h!]
\begin{center}
\begin{tabular}{| c c c c c |}
  \hline
  Algorithm & vCPU & RAM (GB) & Other & Total Monthly Cost (USD) \\
  \hline
  FAISS Baseline & 32 & 768 & 1x V100 16GB & \$4617.57 \\
  DiskANN & 72 & 64 & 3276.8GB SSD & \$2261.18 \\
  CuANNS-IVFPQ & 256 & 2048 & 1x A100 80GB\tablefootnote{The benchmarked machine had eight GPUs, but only one was used, so we only account for one GPU's cost.} & \$16036.46 \\
  CuANNS-Multi & 256 & 2048 & 8x A100 80GB & \$36118.76 \\
  \hline
\end{tabular}
\end{center}
\end{table}
Two submissions from the original \texttt{big-ann-benchmarks} competition could not be included in this comparison: Intel's OptANNe GraphANN submission, and GSI Technology's Gemini submission. The former relied on Intel Optane storage technology, which has been discontinued \cite{intel_optane} and therefore isn't available on any mainstream cloud compute provider, and can't be priced. The latter leverages proprietary hardware not available to cloud providers. However, both of these entries are present in the throughput-per-capex ranking featured earlier.

Our own results came from a 32 vCPU, 150GB machine, which would cost \$1293.09 per month. We achieved 32608 QPS on Microsoft Turing-ANNS and 46712 QPS on Microsoft SPACEV, leading to the numbers presented in Figure~\ref{fig:e2e-big}. These results are also reproduced in tabular form below.

\begin{table}[h!]
\begin{center}
\begin{tabular}{| c c c c c |}
  \hline
  Algorithm &
  \makecell{MS-SPACEV \\
            Queries/Sec \\
            (90\% R@10, \\
            from \href{https://github.com/harsha-simhadri/big-ann-benchmarks/blob/8a2947a3c1dca5c9f41bac4cde71a001698e44cc/neurips21/t3/LEADERBOARDS_PUBLIC.md\#msspace-throughput-rankings}{\textcolor{blue}{\underline{here}}})
        } &
  \makecell{MS-SPACEV\\Throughput / Cost} &
  \makecell{MS-Turing \\
            Queries/Sec \\
            (90\% R@10, \\
            from \href{https://github.com/harsha-simhadri/big-ann-benchmarks/blob/8a2947a3c1dca5c9f41bac4cde71a001698e44cc/neurips21/t3/LEADERBOARDS_PUBLIC.md\#msturing-throughput-rankings}{\textcolor{blue}{\underline{here}}})
        } &
  \makecell{MS-Turing\\Throughput / Cost} \\
  \hline
  FAISS Baseline & 3265 & 0.707 & 2845 & 0.616 \\
  DiskANN & 6503 & 2.876 & 17201 & 7.607 \\
  CuANNS-IVFPQ & 108302 & 6.753 & 109745 & 6.843 \\
  CuANNS-Multi & 839749 & 23.25 & 584293 & 16.18 \\
  Ours & 46712 & 36.12 & 32608 & 25.22 \\
  \hline
\end{tabular}
\end{center}
\end{table}

\end{document}